\newtheorem{theorem}{Theorem}
\pgfplotsset{compat = 1.15}
\pgfplotsset{axis y line=left}
\pgfplotsset{axis x line=bottom}
\pgfplotsset{x tick label style={font=\scriptsize}}
\pgfplotsset{y tick label style={font=\scriptsize}}
\pgfplotsset{x label style={font=\footnotesize}}
\pgfplotsset{y label style={font=\footnotesize}}
\pgfplotsset{legend style={font=\footnotesize}}
\pgfplotsset{title style={font=\small}}
\pgfplotsset{x label style={at={(axis description cs:1.03,0.03)}, anchor=south}}
\pgfplotsset{y label style={at={(axis description cs:0.03,1.03)}, rotate=-90, anchor=west}}
\newcommand{\sinc}{\mathrm{sinc}}
\newcommand{\block}{B}
\newcommand{\f}{\nu}
\newcommand{\fo}{\mu}
\newcommand{\power}{S}
\newcommand{\rr}{r} 
\newcommand{\corr}{\rho} 
\begin{document}

%
\runningtitle{The Minecraft Kernel}

%

\twocolumn[

\aistatstitle{The Minecraft Kernel: \\ Modelling correlated Gaussian Processes in the Fourier domain}

\aistatsauthor{Fergus Simpson \And Alexis Boukouvalas \And  Vaclav Cadek \And  Elvijs Sarkans \And Nicolas Durrande}


\aistatsaddress{Secondmind \And  Secondmind \And Secondmind \And  Secondmind \And Secondmind} 
]

\begin{abstract}
In the univariate setting, using the kernel spectral representation is an appealing approach for generating stationary covariance functions. 
However, performing the same task for multiple-output Gaussian processes is substantially more challenging. We demonstrate that current approaches to modelling cross-covariances with a spectral mixture kernel possess a critical blind spot. 
For a given pair of processes, the cross-covariance is not reproducible across the full range of permitted correlations, aside from the special case where their spectral densities are of identical shape. 
We present a solution to this issue by replacing the conventional Gaussian components of a spectral mixture with block components of finite bandwidth (i.e.\ rectangular step functions).
The proposed family of kernel represents the first multi-output generalisation of the spectral mixture kernel that can approximate any stationary multi-output kernel to arbitrary precision. 
\end{abstract}

\section{INTRODUCTION}
\label{introduction}
Gaussian Processes (GPs) provide a principled and powerful framework for building statistical models~\citep{rasmussen2006}. 
Given some input/output tuples $\{x_i, y_i\}_{i=1}^n$ where $x_i\in \mathcal{X}$ and $y_i\in \mathds{R}$, a GP model typically consists of two elements: a latent GP $f$ that maps input points $x\in \mathcal{X}$ into a latent space, and an observation model (sometimes referred to as the likelihood) that describes the link between $f(x_i)$ and the observation $y_i$. 
The variety of possible choices for the distribution of $f$ and for the observation model make GP models extremely versatile. For example, the choice of the observation model can account for  regression problems with outliers by using heavy tail likelihoods, but it can also be leveraged to tackle different classes of problems such as classification~\citep{nickisch2008approximations} or point process modelling~\citep[see][and references therein]{john2018large}. On the other hand, a large number of mean and covariance functions are available off-the-shelf and they can be used to encode various prior beliefs or assumptions about the system that generated the data, such as the order of differentiability, (non)-stationarity or (quasi)-periodicity. Using the right covariance function comes with massive benefits in term of model accuracy and uncertainty quantification~\citep[see][chapter 5]{rasmussen2006}. 

Three main approaches can be distinguished when it comes to finding a good kernel for the problem at hand. The first one consists of gathering expert knowledge on the phenomenon that generated the data and to define bespoke covariances to encode it \citep{lopez2019physically}, the second is to search automatically through the space of possible kernel combinations \citep{bach2009high, duvenaud2014automatic} and the third consists of choosing a kernel parametrised by a large number of variables such that it is flexible enough to be a good match for a broad range of datasets. This third approach has recently yielded state of the art accuracy on various benchmarks \citep{wilson2013gaussian, sun2018differentiable} and will be the focus of the present work.

The spectral mixture (SM) kernel \citep{wilson2013gaussian, remes2017non} is a canonical example of these highly flexible and heavily parametrised covariance functions. It represents a kernel's spectral density as a mixture of Gaussians, which ensures that the kernel itself can be expressed in closed form. One striking advantage of this approach is that any stationary kernel can be well approximated by the model, provided that the mixture of Gaussians is comprised of a sufficient number of components. An alternative spectral representation based upon a mixture of rectangular blocks of constant spectral density has recently been investigated by \citet{tobar2019band}. This latter approach and its multi-output generalisation will prove to be of particular interest for the current work.

In many applications, it is desirable to use GPs with multivariate outputs. For example if one wants to predict time series corresponding to the temperatures of various neighbouring cities, it is valuable to build a joint model that can account for the correlations between the observations \citep{parra2017spectral}. This complicates the choice of the covariance function because one has to select not only a covariance function for each time series, but one must also specify all cross-covariances between the different time-series. 
For a detailed review of the classic approaches for defining multi-output GPs (Linear Model of Coregionalisation, convolution GPs, etc.), see \cite{alvarez2011kernels}. Generalisations of the SM kernel to a multi-output framework have been proposed by \citet{ulrich2015gp} and \citet{parra2017spectral}. Their approaches consist of modelling the cross-spectral densities between pairs of processes. As we shall demonstrate in this article, such an approach strips the SM kernel of its key ability: to approximate any stationary covariance to arbitrary precision.

The central contribution of this work is to resolve this limitation. We begin by highlighting the root cause of the limitation: is arises as a result of the inevitable overlap between the spectral densities of Gaussian components.  This pathology can  be eliminated by utilising component kernels which possess a finite bandwidth, and can therefore be arranged in a non-overlapping manner. We prove that our generalisation of the sinc kernel \citep{yao1967applications, tobar2019band} to the multi-output case is the first kernel in the spectral mixture class that is dense in the space of multi-output stationary covariances. The practical implication is that the correlations available to the model now spans the full theoretical range. 
This article focuses on a clear exposition of current limitations and how they can be overcome by combining recent work from the literature. Finally we show in the experiments that the inductive bias of the proposed kernels leads to improved performances compared to the conventional form of  multi-output spectral mixture kernels. 


In \S~\ref{sec:background} we review the theoretical background behind univariate and multivariate spectral mixture kernels and highlight the limitations of using a mixture of Gaussians to model cross-spectra. A solution based upon components of finite bandwidth is presented in \S~\ref{sec:solution}, and we show that the proposed method can accurately approximate any stationary multi-output covariance. Experiments are presented in \S~\ref{sec:experiments} where we illustrate the use of the proposed approach for modelling non-stationary time series and for producing  colour images. Finally, concluding remarks are presented in \S~\ref{sec:conclusions}.

Our implementation of the proposed method is based on the GPflow framework~\citep{de2017gpflow}, and the code for generating all of the figures in the paper is available as supplementary material.
\section{SPECTRAL KERNELS} \label{sec:background}
Covariance functions of real-valued processes are functions $K: \ \mathcal{X} \times \mathcal{X} \rightarrow \mathds R$ that are symmetric $K(x, x') = K(x', x)$ and positive definite: $ \sum_{i,j=1}^n a_i a_j K(x_i, x_j) \geq 0$, $\forall n \in \mathds{N}, \forall x_i \in \mathcal X, \forall a_i \in \mathds R$. Checking that a function is positive definite may seem a daunting prospect but for stationary covariances (kernels that can be represented as univarite functions $K(x, x') = K(x-x')$ using the classic notation overloading) a simpler condition is given by Bochner's Theorem~\citep{wendland2004scattered}:
\begin{theorem}[Bochner's theorem]
An integrable function $K(\cdot)$ is the covariance
function of a weakly-stationary real-valued stochastic process if and only if it admits the representation
\begin{equation}
K (\bm{r}) = \int_\mathds{R} e^{i\bm{\f}^\top \bm{r}} S (\bm{\f}) d\bm{\f}
\end{equation}
where $S:\ \mathds{R}^N \rightarrow \mathds{R}$ is integrable, symmetric $S(\bm{\f}) = S(-\bm{\f})$, and positive $S(\bm{\f}) \geq 0$. 
\end{theorem}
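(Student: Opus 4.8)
The statement is an equivalence, so I would prove the two implications separately, treating sufficiency as the warm-up and necessity as the substantive half.

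The plan is to start with the easy direction: assume $K(\bm r)=\int e^{i\bm\f^\top\bm r}S(\bm\f)\,d\bm\f$ with $S\ge 0$, symmetric and integrable, and verify that $K$ is a genuine stationary covariance. The relations $K(\bm r)=K(-\bm r)$ and the fact that $K$ is real-valued both follow immediately from $S(\bm\f)=S(-\bm\f)$ via the change of variables $\bm\f\mapsto-\bm\f$ inside the integral. For positive definiteness I would insert the representation into the defining quadratic form and exchange the finite sum with the integral:
\[
\sum_{i,j=1}^n a_i a_j K(x_i-x_j)=\int_{\mathds R^N} S(\bm\f)\,\Big|\sum_{i=1}^n a_i e^{i\bm\f^\top x_i}\Big|^2 \, d\bm\f\ \ge\ 0,
\]
which is nonnegative because $S\ge0$ and the remaining factor of the integrand is a modulus squared. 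Integrability of $S$ guarantees the integral is finite, so this direction needs no delicate analysis.

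For the converse I would assume $K\in L^1$ is symmetric and positive definite and produce the density as its Fourier transform, $S(\bm\f)=(2\pi)^{-N}\int e^{-i\bm\f^\top\bm r}K(\bm r)\,d\bm r$. Because $K$ is integrable, $S$ is well defined, bounded and continuous, and the symmetry and real-valuedness of $K$ force $S$ to be real and symmetric; Fourier inversion then recovers the claimed representation. The crux is to show $S\ge0$ everywhere. The first step is to upgrade the discrete positive-definiteness inequality to its continuous analogue, $\int\!\int \overline{g(x)}g(y)K(x-y)\,dx\,dy\ge0$ for suitable test functions $g$, which I would obtain by approximating the double integral with Riemann sums and passing to the limit, having first extended positive definiteness from real to complex coefficients (automatic for a real symmetric kernel, since the imaginary part of the form is antisymmetric and vanishes against the symmetric $K$). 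I would then choose a windowed complex exponential $g_\epsilon(x)=e^{i\bm\f_0^\top x}\phi_\epsilon(x)$ centred at the target frequency $\bm\f_0$, with $\phi_\epsilon$ a rescaled bump (a Gaussian is convenient). Applying Plancherel's identity converts the double integral into $\int_{\mathds R^N} S(\bm\f)\,|\widehat{\phi_\epsilon}(\bm\f-\bm\f_0)|^2\,d\bm\f\ge0$, a weighted average of $S$ against a nonnegative kernel concentrating at $\bm\f_0$. Normalising so that $|\widehat{\phi_\epsilon}|^2$ acts as an approximate identity and letting $\epsilon\to0$, the continuity of $S$ yields $S(\bm\f_0)\ge0$; since $\bm\f_0$ is arbitrary this closes the argument.

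The main obstacle is precisely this extraction step in the necessity direction: everything else is bookkeeping, but promoting a discrete positivity condition to pointwise nonnegativity of a Fourier transform demands care. I need to justify the interchange of summation and integration and of the two integrations (Fubini, using $K\in L^1$ together with rapid decay of the test functions), the convergence of the Riemann sums to the double integral, and the approximate-identity limit. A technical wrinkle worth flagging is that the theorem as stated presupposes $K$ integrable and returns an integrable density $S$, whereas the classical Bochner theorem only guarantees a finite nonnegative \emph{measure}; I would therefore either assume enough regularity (e.g.\ $S\in L^1$) to invoke the inversion formula directly, or replace the pointwise argument with a Gaussian (Abel) summability method that reconstructs $K$ from $S$ without presupposing inversion.
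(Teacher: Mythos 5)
The paper never proves this statement: Bochner's theorem is quoted as classical background, with a citation to Wendland's monograph, and is used only as the starting point for the spectral-mixture construction. There is therefore no in-paper proof to compare against, and your attempt must be judged on its own. On those terms it is essentially correct, and it is the standard textbook argument. The sufficiency half is complete as written: the quadratic form collapses to $\int S(\bm{\nu})\,\bigl|\sum_i a_i e^{i\bm{\nu}^\top x_i}\bigr|^2 d\bm{\nu} \geq 0$, and symmetry of $S$ gives that $K$ is real and even. The necessity half follows the classical route (take $S$ to be the Fourier transform of $K$; extend positive definiteness from real to complex coefficients, which your antisymmetry remark handles; upgrade to the integrated inequality; test against windowed exponentials and let the window shrink), and you correctly flagged the one genuine trap, namely that classical Bochner yields a finite nonnegative \emph{measure} while the statement here asserts an integrable density. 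Your Gaussian/Abel summability fix closes that gap: once $S \geq 0$ is established, testing against $e^{-\pi\epsilon^2\Vert\bm{\nu}\Vert^2}$ and using the multiplication formula gives $\int S(\bm{\nu})\,e^{-\pi\epsilon^2\Vert\bm{\nu}\Vert^2} d\bm{\nu} \to K(0) < \infty$, so monotone convergence yields $S \in L^1$ and inversion then recovers $K$ everywhere by continuity of both sides. The only point you should state explicitly rather than leave implicit is that the Riemann-sum upgrade and the pointwise (rather than a.e.) inversion both require continuity of $K$; this holds for the covariance of a mean-square continuous weakly stationary process, but it is not implied by integrability alone, a looseness that is really in the paper's formulation of the theorem rather than in your proof.
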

The function $S$ is usually referred to as the \emph{power spectrum} or \emph{spectral density}.
%
%
%
Bochner's theorem makes it an appealing basis for defining new covariances: the full family of stationary kernels becomes readily available by exploring positive spectral densities, without having to check any positive definiteness condition. This is the approach  followed by~\cite{wilson2013gaussian} for defining the spectral mixture kernel.

\subsection{Univariate Spectral Mixture kernels} \label{sec:SM}

When modelling the spectral density of the kernel, it is advantageous to choose a parameterisation such that its Fourier transform, i.e.\  the kernel itself, is available in closed form. This increases both the speed and precision with which the marginal likelihood can be evaluated. \citet{wilson2013gaussian} proposed to use spectral densities given by the sum of $Q$ Gaussian pairs:
\begin{align*} \label{eq:component}
\power(\f) =  \sum_{i=1}^Q \frac{A_i}{2} \left[G(\f, \mu_i, \sigma_i) + G(-\f, \mu_i, \sigma_i) \right] \, ,
\\
G(\f, \mu, \sigma) = \frac{1}{  \sqrt{2 \pi} \sigma} \exp \Big(- \frac{ \left( \f - \mu \right)^2}{2 \sigma^2}\Big) \, , 
\end{align*}
where the kernel parameters $\theta=\{ A_i, \mu_i, \sigma_i\}_{i=1}^Q$ satisfy $A_i \geq 0$ and $\sigma_i > 0$  for all $i$. This formalism is readily expanded to operate in higher dimensions by replacing the univariate normal distributions by their multivariate counterparts $G(\bm{\f}, \bm{\mu}_i, \Sigma_i) = \mathcal{N}(\bm{\f} | \bm{\mu}_i, \Sigma_i)$ where the covariance matrix is diagonal. The corresponding kernel, which we will refer to as the \emph{Gaussian-SM}, is
\begin{equation} \label{eq:spectral_mixture}
K(\bm r) =  \sum_{i=1}^Q  A_i \exp(- 2 \pi^2 \bm{r}^\top \Sigma_i \bm{r}) \, \cos( 2 \pi \bm{r}^\top \bm{\mu_i} )  \, .
\end{equation}
%
Gaussian-SM are very flexible and as outlined in \cite{wilson2013gaussian} one of their key property is that they can approximate (in the $L^1$ sense) any stationary covariance. 

\citet{tobar2019band} proposed to replace the Gaussian components in the spectral mixture kernel by block components, motivated by the potential applications relating to signal processing. As with the Gaussian case, it is straightforward to generalise this `block-SM' kernel to input spaces of dimension $D$ by taking the product of $D$ one-dimensional blocks:
\begin{equation}  \label{eq:pk_minecraft}
\begin{split}
    \power(\bm{\f}) &= \sum_{i=1}^Q \frac{A_i}{2}  [ \block_{\bm{\mu}_i, \bm{w}_i}(\bm{\f}) + \block_{-\bm{\mu}_i, \bm{w}_i}(\bm{\f})]
    \\
    \block_{\bm{\mu}, \bm{w}}(\bm{\f}) &=
    \begin{cases}
      \prod_k \frac{1}{w_k} & \text{if all}\ |\f_k - \fo_k| <   \frac{1}{2}w_k \\
      0 & \text{otherwise}
    \end{cases} 
\end{split}
\end{equation}
%
which results in 
\begin{equation}
 K(\bm r) = \sum_{i=1}^Q A_i  \, \cos(2 \pi \bm{r}^\top  \bm{\fo}_i) \prod_{d=1}^D \sinc(r_d \bm{w}_{id}) \, ,
\end{equation}
 where $\displaystyle \sinc(x) \equiv \frac{\sin(\pi x)}{ \pi x}$.

\subsection{Multi-Output Spectral Kernels}
\label{seq:multi_output_SM}
Modelling $N$ outputs (or channels) demands the construction of a model capable of describing not only $N$ spectral densities, but also $\mathcal{O}(N^2)$ distinct cross-spectra. The first multi-output version of SM kernels consisted of using the Linear Model of Coregionalisation (LMC) approach, where each channel is defined as a linear combination of $R$ independent processes with SM kernels \citep{wilson2014covariance}. This has been generalised by \citet{ulrich2015gp} who included phase shifts between the various channels. 
%
Finally, \citet{parra2017spectral} further refined the previous proposals by using the generalisation of Bochner's theorem for multi-output processes:
\begin{theorem}[Cram\'er's Theorem]
A family $\{K_{ij}\}_{i,j=1}^N$ of integrable functions is the kernel of a weakly-stationary multivariate stochastic process if and only if they admit the representation
\begin{equation}
K_{ij} (\bm{r}) = \int_\mathds{R} e^{\mathrm{i}\bm{\f}^\top \bm{r}} S_{ij} (\bm{\f}) d\bm{\f} \qquad \forall i, j \in \{1,\ \dots,\ N\}
\end{equation}
where $S_{ij}:\ \mathds{R}^N \rightarrow \mathds{C}$ are integrable functions that fulfil the positive definiteness condition pointwise
\begin{equation}
\sum_{i,j=1}^N \bar{z}_i z_j S_{ij} (\bm \f) \geq 0 \qquad  \forall z_1,\ \dots,\ z_N \in \mathds{C}, \forall \bm \f \in \mathds{R}^N.
\end{equation}
\end{theorem}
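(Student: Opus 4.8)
The plan is to read this as the matrix-valued analogue of Bochner's theorem and to prove the two implications separately, reducing the $N$-output problem to the scalar case via linear combinations. The central device is, for a fixed coefficient vector $\bm z=(z_1,\dots,z_N)\in\mathds{C}^N$, the scalar (complex) process $g_{\bm z}(x)=\sum_i \bar z_i f_i(x)$, whose stationary covariance is $K_{\bm z}(\bm r)=\sum_{i,j=1}^N \bar z_i z_j K_{ij}(\bm r)$. Under the proposed representation one has $K_{\bm z}(\bm r)=\int e^{\mathrm{i}\bm\f^\top\bm r}\big(\sum_{ij}\bar z_i z_j S_{ij}(\bm\f)\big)\,d\bm\f$, which ties the pointwise matrix condition on $(S_{ij})$ directly to the scalar nonnegativity condition of Bochner's theorem applied to $g_{\bm z}$.

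For the sufficiency of the representation I would argue by direct computation. Fixing arbitrary points $x_1,\dots,x_n$, outputs, and complex weights $c_{ia}$, I substitute the integral representation into the Hermitian form $\sum_{a,b,i,j}\bar c_{ia}c_{jb}K_{ij}(x_a-x_b)$ and exchange the finite sums with the integral (licensed by integrability of the $S_{ij}$) to obtain $\int \sum_{ij}\bar w_i(\bm\f)\,w_j(\bm\f)\,S_{ij}(\bm\f)\,d\bm\f$ with $w_i(\bm\f)=\sum_a c_{ia}e^{-\mathrm{i}\bm\f^\top x_a}$. The integrand is nonnegative at every $\bm\f$ by the assumed pointwise positive-definiteness of $(S_{ij}(\bm\f))$ (applied to the vector $\bm w(\bm\f)$), so the form is nonnegative, which is precisely the positive-definiteness demanded of a valid multi-output covariance.

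The converse is where the real work lies, and I would build it in two stages. First, each combination $K_{\bm z}$ is a genuine scalar stationary covariance, so the (complex) scalar version of Bochner's theorem furnishes a nonnegative integrable density $S_{\bm z}$; in particular the diagonals $K_{ii}$ give nonnegative densities $S_{ii}$. Second, I would recover the complex cross-spectra by polarization, taking $\bm z=\bm e_i+\bm e_j$ and $\bm z=\bm e_i+\mathrm{i}\,\bm e_j$ to isolate the real and imaginary parts of $S_{ij}$ as fixed linear combinations of the densities $S_{\bm z}$, thereby defining a Hermitian matrix-valued function $\bm\f\mapsto(S_{ij}(\bm\f))$. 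Because $\sum_{ij}\bar z_i z_j S_{ij}(\bm\f)=S_{\bm z}(\bm\f)\ge 0$, this matrix is positive semidefinite at almost every $\bm\f$; its off-diagonal entries are integrable via the bound $|S_{ij}|\le\sqrt{S_{ii}S_{jj}}\le\tfrac12(S_{ii}+S_{jj})$ valid for positive semidefinite Hermitian matrices.

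I expect the main obstacle to be exactly this reconstruction step: Bochner's theorem returns a scalar density for each fixed $\bm z$ only up to a null set, so assembling the $S_{ij}$ into a single matrix whose positivity holds simultaneously for \emph{all} $\bm z$ requires choosing the exceptional null sets coherently, most cleanly by verifying the inequality on a countable dense set of test vectors $\bm z$ and invoking continuity in $\bm z$. Restricting, as the statement does, to integrable $K_{ij}$ (equivalently, absolutely continuous spectral measures) sidesteps the genuinely delicate measure-theoretic issues that appear for singular spectra and lets the polarization argument go through cleanly.
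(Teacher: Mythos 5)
The paper does not prove this statement at all: Cram\'er's theorem is quoted as classical background (alongside Bochner's theorem) and is then used as a black box in the proof of Theorem~3, so there is no in-paper proof to compare against; your proposal can only be judged against the standard literature argument, which is essentially what you have reconstructed. Judged on its own, your proof is sound. The sufficiency half is complete once you add two routine remarks: the pointwise condition forces each matrix $(S_{ij}(\bm\nu))$ to be Hermitian, which yields the symmetry $K_{ij}(\bm r)=\overline{K_{ji}(-\bm r)}$ required of a covariance, and a Kolmogorov-extension argument (e.g.\ with Gaussian finite-dimensional laws) converts the positive-definite family into an actual weakly-stationary process. For the necessity half, your reduction to scalar Bochner via $g_{\bm z}=\sum_i \bar z_i f_i$ plus polarization works, and you correctly identify the two technical points: integrability of the $K_{ij}$ is what turns the scalar spectral measures into integrable (indeed continuous) densities, and simultaneity of positivity over all $\bm z$ is recovered from a countable dense set of test vectors by continuity of the Hermitian form in $\bm z$, with continuity of the densities then upgrading a.e.\ positivity to the pointwise statement. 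Two small economies are worth noting: since $K_{ij}\in L^1$, you can simply \emph{define} $S_{ij}$ as the (inverse) Fourier transform of $K_{ij}$; then $S_{\bm z}=\sum_{ij}\bar z_i z_j S_{ij}$ holds identically, so no polarization or null-set bookkeeping is needed at all. And you should state explicitly the final step your write-up leaves implicit: that the representation formula for the off-diagonal $K_{ij}$ follows from the scalar inversion formulas by linearity (equivalently, by applying polarization to both sides). No genuine gap.
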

The kernel obtained by~\citet{parra2017spectral} is more general than the previous proposals since it can account for both phase shifts and delays between channels. In all three cases, the formalism ensures that each channel takes the form of a stationary GP with a conventional Gaussian-SM kernel.
%

Despite these increasing levels of refinement in the modelling of multi-output kernels in the spectral domain, the following section will show that none of the aforementioned methods can represent the full range of cross-correlations which a given pair of channels is capable of exhibiting.

\begin{figure*}
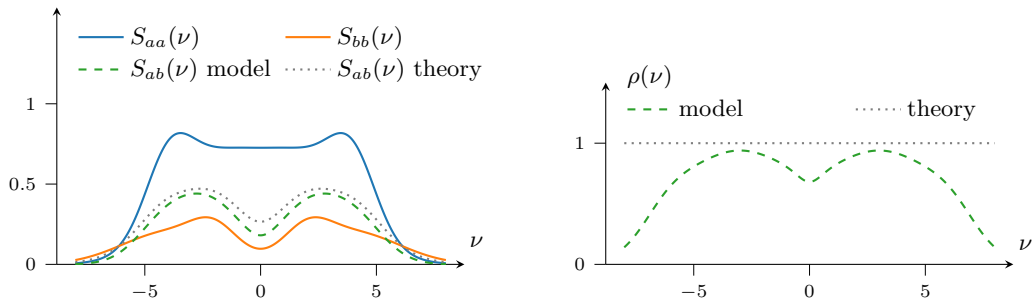
 
\centering 
\input{figures/gaussian_cross_spectrum.tex} \qquad
\input{figures/gaussian_coeff.tex}
\caption{An illustration of the limitation of Gaussian-SM multi-output covariances. Left: The kernel's maximal cross spectrum (dashed line) falls short of the theoretical limit (dotted line) across the whole frequency spectrum. Right: This shortcoming means that possible correlations between the spectral modes are truncated to a reduced range, only permitted to reach the dashed line, and far below the theoretical limit.}  \label{fig:demo}
\end{figure*}

\subsection{The case of missing cross-covariance}
\label{sec:universality}

Arguably the most valuable characteristic of the Spectral Mixture kernel lies in its ability to mimic any stationary kernel. We will now show that this property has been lost in the previous attempts to generalise the SM formalism to multivariate processes. To highlight the crux of the problem at hand, it is instructive to walk through a minimal worked example. We shall explore the case of a GP where the two channels, say $a$ and $b$, 
have a Gaussian-SM kernel with two components: $S_{a} = C_{a_1} + C_{a_2}$ and $S_{b} = C_{b_1} + C_{b_2}$, with the notations $C_{a_1} = \frac{1}{2}A_1 (G_{a_1}^+ + G_{a_1}^-)$ and $G_{a_1}^\pm = G(\cdot, \pm \mu_{a_1}, \sigma_{a_1})$. There is no time delay or phase shift to be concerned about in this example, so the three approaches reviewed in \S~\ref{seq:multi_output_SM} are now equivalent. The resulting cross-spectrum, as given in \citet{parra2017spectral}, is
\begin{equation} \label{eq:walk_through_spectrum}
S_{ab} = \sqrt{G_{a_1}^+G_{b_1}^+} + \sqrt{G_{a_1}^-G_{b_1}^-} + \sqrt{G_{a_2}^+ G_{b_2}^+}  + \sqrt{G_{a_2}^- G_{b_2}^-} \, .
\end{equation}
Plugging these expressions into the definition of the correlation coefficient $\corr(\f)$ which measures if the Fourier modes of $a$ and $b$ tend to be in phase with each other yields
\begin{equation} \label{eq:corr_coefficient}
\begin{split}
\corr (\f) &= \frac{S_{ab}(\f)}{ \sqrt{S_{a}(\f) S_{b}(\f)}} 
\\
&= \frac{ \sqrt{G_{a_1}^+G_{b_1}^+} + \sqrt{G_{a_1}^-G_{b_1}^-} + \sqrt{G_{a_2}^+ G_{b_2}^+}  + \sqrt{G_{a_2}^- G_{b_2}^-}}{\sqrt{(C_{a_1} + C_{a_2})(C_{b_1}  + 
C_{b_2})}}\, .
\end{split}
\end{equation}
At any given frequency $\f$, the Cauchy-Schwarz inequality tells us that this coefficient is 1 if the modes of the two processes are maximally correlated. However, given the specific shape that is assumed for $\power_{ab}$ in Eq.~\ref{eq:walk_through_spectrum} this can only arise if the relative contributions from the two components are identical, such that $C_{a_1} / C_{a_2} = C_{b_1} / C_{b_2}$.
%
For this condition to hold across all frequencies suggests that the two spectra must be of exactly the same shape. Aside from that special case of matching spectra, the Cauchy-Schwarz bound cannot be saturated by the model, and so we find that some loss in cross-covariance is inevitable. Naturally, the same limitation also arises for the less commonly encountered case where the pair of processes are anti-correlated.

The cross-spectrum between these two processes is shown in the left hand panel of Figure \ref{fig:demo}. Note the gap between the maximum attainable value of the model cross-spectrum and the theoretical maximal cross-spectrum. The missing power (or equivalently, the lost correlation) in the right panel is due to the summation of the Gaussian components where they are treated \emph{as if they were independent}, when in reality they need not be. There is inevitably some degree of overlap between the tails of any two Gaussian components, and yet correlations which exist between these tails cannot be fully captured by the conventional model. In the event that the two processes are highly correlated, this can prove to be an extremely poor approximation.

The cross-spectral density $S_{AB}$ between two processes A and B at a given frequency $\nu$ can be interpreted as the expectation of the product of their Fourier modes $\langle f_A(\nu),  f_B^*(\nu) \rangle$. The Cauchy-Schwarz inequality tells us that $| \langle u, v \rangle |^2 \leq \langle u, u \rangle \langle v, v \rangle$. The theoretical range of a cross spectral density is therefore given by $S^2_{AB} \leq  S_{AA} S_{BB}$, and the dotted  line in the figure represents the regime where this inequality is saturated. This corresponds to the Fourier components being perfectly in phase. 

As the overlap between a given pair of components grows, so too does the potential loss in cross-covariance experienced by a Gaussian-SM kernel. This highlights the danger of using broadband components, those which span large frequencies due to a high bandwidth $\sigma$, in a multi-output setting. 

Note that introducing more than two components does not resolve the problem - it exacerbates it. This is because a process described by $N$ spectral components generates $\mathcal{O}(N^2)$ pairs of overlapping Gaussians, yielding greater opportunities for the cross-covariance to be lost.

\section{A UNIVERSAL MODEL FOR CROSS SPECTRAL DENSITIES} \label{sec:solution}

In this section we shall specify a GP kernel which permits a full exploration of the space of possible cross-covariances, while ensuring that the number of model parameters grows linearly with the number of components. This constitutes the central contribution of this work. 

\subsection{The Minecraft kernel}
Since the limitation of the multi-output Gaussian-SM comes from the overlap between the tails of the components, the issue can be resolved by selecting components with disjoint support. By replacing the pairs of Gaussians with pairs of `blocks' (i.e.\ rectangular functions, as defined in \S~\ref{sec:SM}) we can reap two major benefits. First of all, it will allow a complete description of the cross-covariance for highly correlated processes, resolving the issue highlighted in the previous section, thereby allowing any multi-output stationary kernel to be well approximated. Secondly, it becomes easier to evaluate the coefficients which determine the cross-covariances, using a technique which is outlined in the following subsection. 
\begin{figure*}[t]
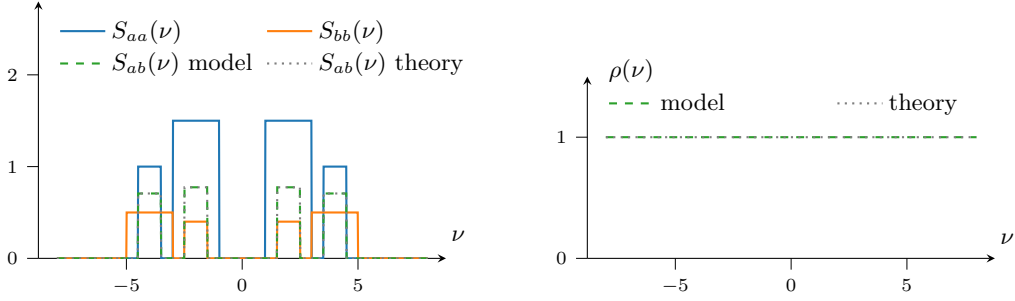

\centering
\input{figures/block_cross_spectrum.tex}
\qquad 
\input{figures/block_coeff.tex}
\caption{The introduction of spectral blocks resolves the issue highlighted in Figure \ref{fig:demo}. The model cross-spectrum can now be constructed from non-overlapping components, allowing the cross-spectrum to replicate the target. All Fourier modes can now be fully correlated, so there is no loss in cross-covariance.}  
\label{fig:cross_spectra_blocks}
\end{figure*}
This motivates the introduction of the \emph{Minecraft kernel} whose spectral representation is a sum over $Q$ symmetrised block components as defined in (\ref{eq:pk_minecraft}):
\begin{align} \label{eq:minecraft_kernel}
\power_{ij}(\bm{\f}) &= \sum_{q=1}^Q \frac{1}{2} A_{ij}^q (B_{\bm{\mu}^q,\bm{w}^q}(\f) + B_{-\bm{\mu}^q,\bm{w}^q}(\f)) \\ 
K_{ij}(\bm{\rr}) &= \sum_{q=1}^Q A_{ij}^{q} \cos( \bm{\rr}^\top  \bm{\fo}^{q} ) \prod_{d=1}^D  \sinc \left( \rr_d  w_d^{q}   \right)    \, .  \label{eq:minecraft_k}
\end{align}

Provided that the $Q$ amplitude matrices $\{A^q_{ij}\}_{i,j=1}^N$ are positive definite, Cram\'er's theorem guarantees that the Minecraft kernel is a valid covariance function. Note that the Minecraft Kernel can either be interpreted as the multi-output generalisation of the block-SM kernel~\citep{tobar2019band}, or as a Riemann approximation of the integral in Cram\'er's theorem if the $S_{ij}$ are seen as a constant per block functions. This remark suggests that the Minecraft kernel can recover the universal approximation property that had been lost in previous multi-output SM kernels. This is guaranteed by the following result:

\begin{theorem} \label{thm:universality}
Minecraft kernels are dense in the space of multi-output stationary real-valued covariance functions for the $L^1$ norm.
\end{theorem}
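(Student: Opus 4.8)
The plan is to read the theorem through the Cram\'er correspondence: each valid covariance is identified with its matrix-valued spectral density, and, by the remark immediately preceding the statement, a Minecraft spectral density is exactly a piecewise-constant, per-block density. So I would reduce the claim to the following: approximate in $L^1$ an arbitrary admissible spectral density by a block-step function \emph{while preserving pointwise positive-definiteness}. Concretely, by Cram\'er's theorem the target real-valued covariance $K^*$ corresponds to a Hermitian spectral density $\bm{S}^*(\bm{\nu})=[S^*_{ij}(\bm{\nu})]$ that is positive semi-definite at every $\bm{\nu}$, has each entry in $L^1(\mathds{R}^D)$, and obeys the real-process symmetry $S^*_{ij}(-\bm{\nu})=\overline{S^*_{ij}(\bm{\nu})}$. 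First I would fix $\epsilon>0$, choose a box $\Omega=[-R,R]^D$ carrying all but $\epsilon$ of the spectral mass of each entry (possible since $S^*_{ij}\in L^1$), and partition $\Omega$ into a symmetric regular grid of disjoint blocks $\{\Omega_q\}$ of width $\bm{w}$.

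On each block I would set the approximating density equal to the block average $\bar{\bm{S}}^q=\tfrac{1}{|\Omega_q|}\int_{\Omega_q}\bm{S}^*(\bm{\nu})\,d\bm{\nu}$, realised in the Minecraft parametrisation by the amplitude matrices $A^q=2\,|\Omega_q|\,\bar{\bm{S}}^q$ (the prefactor absorbing the block normalisation $\prod_k 1/w_k$ and the $\pm\bm{\mu}^q$ symmetrisation). The decisive structural step is that each $\bar{\bm{S}}^q$ is positive semi-definite, being an integral average of the positive-semi-definite matrices $\bm{S}^*(\bm{\nu})$; hence the $A^q$ are admissible and, by Cram\'er's theorem, the resulting block density $\hat{\bm{S}}$ is genuinely the spectral density of a Minecraft kernel $\hat K$ (strict positive-definiteness, if required, follows from an arbitrarily small diagonal perturbation). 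This is precisely where disjoint blocks buy what overlapping Gaussians could not: averaging preserves positive-definiteness block-by-block without the cross-contamination between components that truncated the cross-covariance in Section~\ref{sec:universality}. Pairing the block at $\bm{\mu}^q$ with its mirror at $-\bm{\mu}^q$ and assigning conjugate amplitudes enforces $S^*_{ij}(-\bm{\nu})=\overline{S^*_{ij}(\bm{\nu})}$, so $\hat K$ is real-valued; in the purely co-spectral (no-delay) case the amplitudes are real and symmetric, recovering the cosine form of~(\ref{eq:minecraft_k}).

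It then remains to drive the two error sources to zero. The tail outside $\Omega$ contributes at most $\epsilon$ per entry by construction, and on $\Omega$ the step function of block averages converges to $\bm{S}^*$ in $L^1$ as the mesh $\|\bm{w}\|\to0$, by the density of step functions in $L^1$ (equivalently Lebesgue differentiation); since there are only $N^2$ entries, $\|\hat S_{ij}-S^*_{ij}\|_{L^1(\mathds{R}^D)}\to0$ for all $i,j$, which is the asserted density. As a dividend, the approximation transfers to the covariances themselves through the Fourier bound
\begin{equation*}
\sup_{\bm{\rr}}\,\bigl|\hat K_{ij}(\bm{\rr})-K^*_{ij}(\bm{\rr})\bigr|
=\sup_{\bm{\rr}}\Bigl|\int_{\mathds{R}^D} e^{\mathrm{i}\bm{\nu}^\top\bm{\rr}}\bigl(\hat S_{ij}(\bm{\nu})-S^*_{ij}(\bm{\nu})\bigr)\,d\bm{\nu}\Bigr|
\le \|\hat S_{ij}-S^*_{ij}\|_{L^1(\mathds{R}^D)},
\end{equation*}
so the covariances converge uniformly as well.

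The step I expect to be the main obstacle is not the measure-theoretic convergence but the positive-definiteness bookkeeping: one must check that the \emph{same} block-averaging which yields the $L^1$ approximation simultaneously produces admissible amplitude matrices, and that the symmetrisation enforcing real-valuedness (and, when delays or phase shifts are present, the conjugate-paired complex amplitudes) destroys neither the positive-definiteness nor the $L^1$ estimate. Making ``valid spectral density'' and ``$L^1$-close to the target'' hold at once --- rather than trading one against the other --- is the crux, and it is exactly the disjoint support of the blocks that renders the two requirements compatible.
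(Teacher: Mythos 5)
Your proposal is correct and follows essentially the same route as the paper's proof: approximate the Cram\'er spectral density by block averages over disjoint, symmetrically placed rectangles, observe that averaging preserves pointwise positive semi-definiteness because PSD matrices form a convex cone, and invoke Cram\'er's theorem to certify the block density as a genuine Minecraft kernel before passing to the limit. The only differences are cosmetic — you make the truncation box, mesh refinement, and conjugate-symmetry bookkeeping explicit where the paper appeals abstractly to the density of simple functions in $L^1$, and your sup-norm transfer to the kernels via the Fourier bound is, if anything, a cleaner statement of the final convergence step.
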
 
\begin{proof}
Let $K = \{K_{ij}\}_{i,j=1}^N$ be the covariance of weakly-stationary real-valued multivariate stochastic process, and let $\{S_{ij}\}_{i,j=1}^N$ be the associated spectral density given by Cram\'er's theorem. Finally, let $R_{\bm \mu, \bm w}$ denote the hyper-rectangle in $\mathds{R}^D$ centred on $\bm \mu$ with width $w_i$ along the $i$th axis.
Since simple functions are dense in $L^1$ \citep{bogachev2007measure}, and since $S_{ij}(\bm \f) = S_{ij}(- \bm \f)$, we can find a sequence of sets of non-overlapping rectangles $(\{R_{\bm{\mu}_{kl},\bm{w}_{kl}}\}_{l=1}^{L_k} \cup \{R_{-\bm{\mu}_{kl},\bm{w}_{kl}}\}_{l=1}^{L_k})_{k \geq 0}$ such that for all $i,j \in \{1,\ \dots, \ N\}$
\begin{equation} \label{eq:proof}
\begin{split}
\bigg(\sum_{l=1}^{L_k} \int_{R_{\bm \mu_{kl}, \bm w_{kl}}} \hspace{-8mm}S_{ij}(\bm{\f})  d\bm \f \, 
\big(B(\cdot ,\, &\bm{\mu}_{kl}, \bm{w}_{kl}) \,+\\[-3mm]
& B(\cdot, - \bm{\mu}_{kl}, \bm{w}_{kl})\big)  \bigg)_{k \geq 0}
\end{split}
\end{equation}
is a sequence of constant per block functions that converges to $S_{ij}$ in $L^1$. Let $S_{ij}^k$ denote the elements of these sequences, we will now prove that they satisfy the two conditions of Cram\'er's theorem. First, these functions are clearly integrable. Second, for some given $k$ and $\bm \f$, we have either $\bm \f \notin \bigcup_l R_{\pm \bm \mu_{kl}, \bm w_{kl}}$ which implies $S_{ij}^k (\bm \f)=0$ for all $i,j$, or there exists a unique $l$ such that $\bm \f \in R_{\pm \bm \mu_{kl}, \bm w_{kl}}$ and $S_{ij}^k (\bm \f)= \frac{1}{|R_{\bm \mu_{kl}, \bm w_{kl}}|} \int_{R_{\bm \mu_{kl}, \bm w_{kl}}} S_{ij}(\bm \f) d \bm \f$ where $|\cdot|$ is the area of the rectangle. In both cases, $(S_{ij}^k (\bm \f))_{i,j=1}^N$ is positive definite: either because a matrix of zero is trivially positive definite, or because the set of positive definite matrices is a convex cone and $S_{ij}^k (\bm \f)$, is the rescaled integral of positive definite matrices. Cram\'er's theorem thus applies and tells us that the Fourier transform of $S_{ij}^k$ is the covariance function of a weakly-stationary multivariate process which by definition is a Minecraft kernel. By continuity of the Fourier transform, this sequence of Minecraft kernels converges to $K_{ij}$ for the $L^1$ norm.
\end{proof}
\begin{figure*}
    \centering
    \includegraphics[width=5cm]{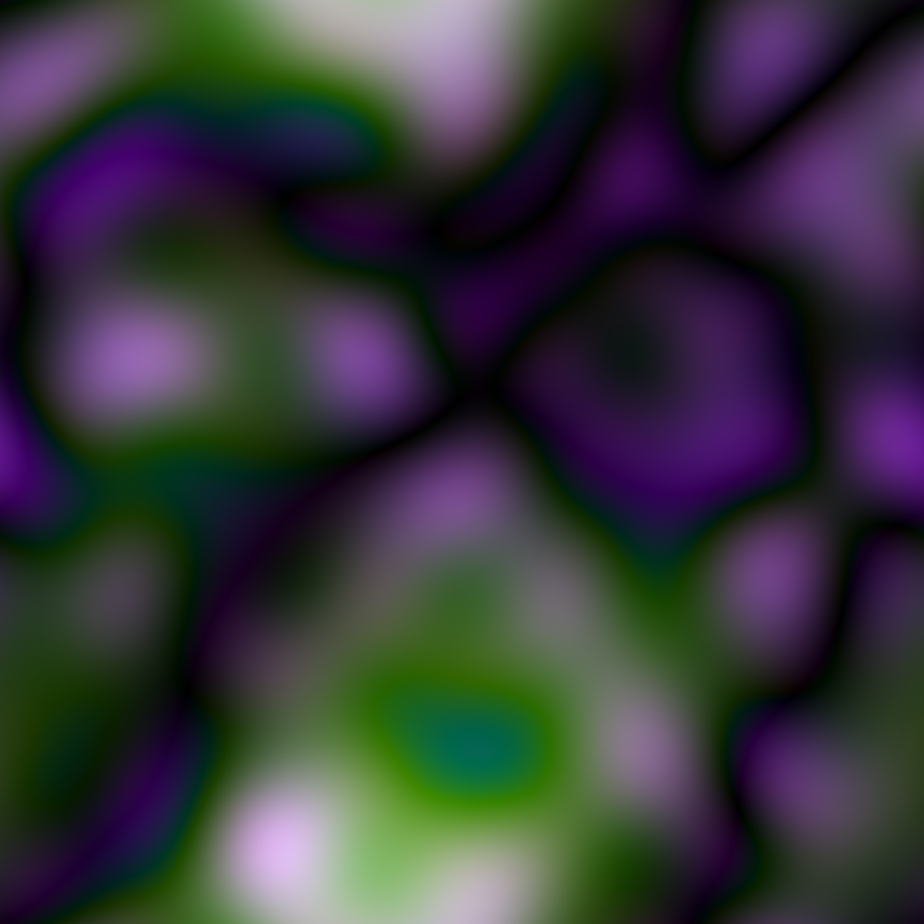} 
    \qquad
    \includegraphics[width=5cm]{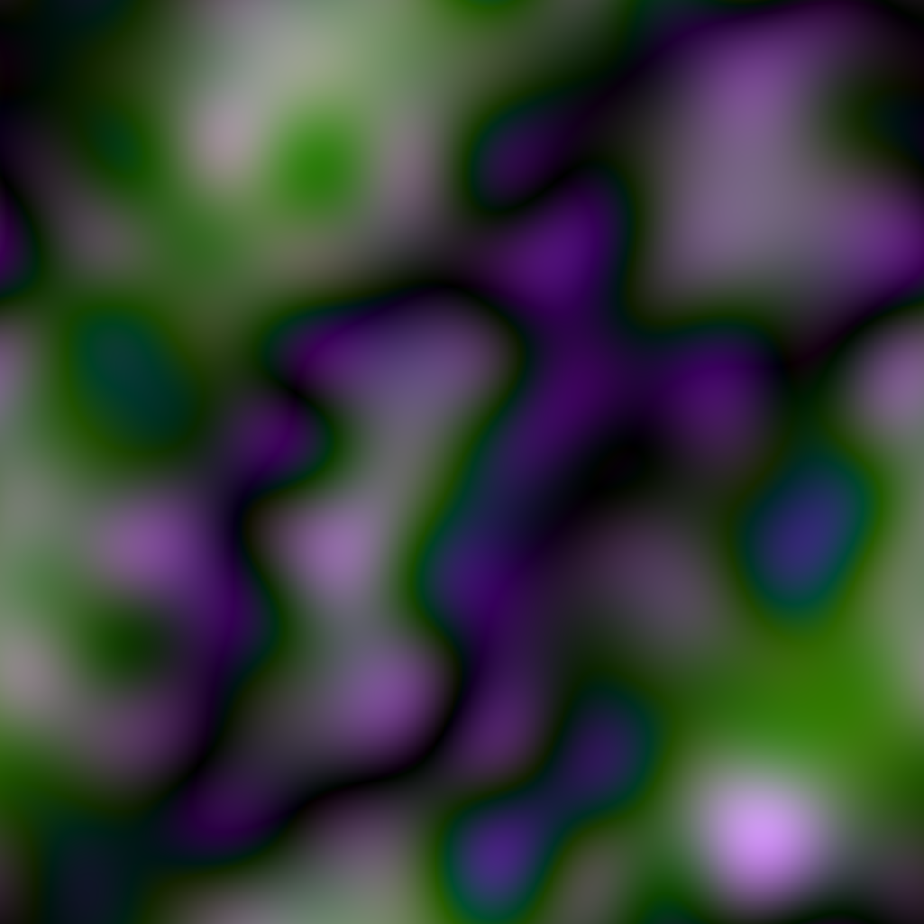}
    \qquad 
    \includegraphics[width=5cm]{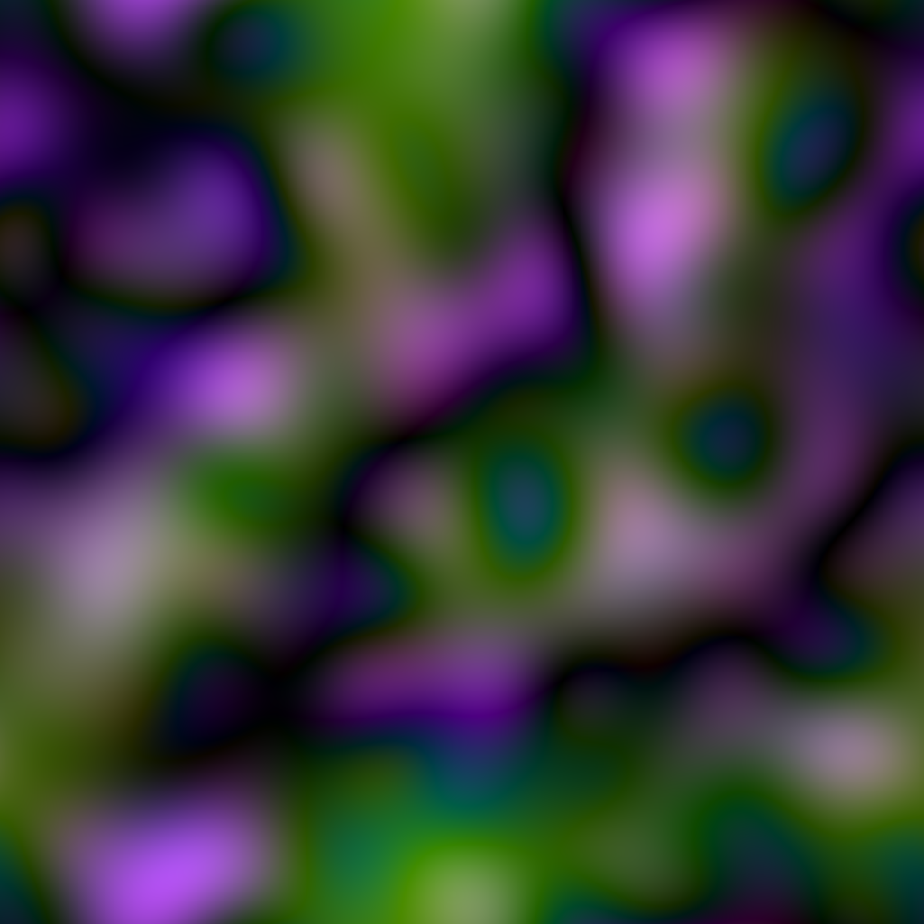} 
    \caption{A comparison of images where the RGB channels are determined by drawing samples from a multi-output Gaussian process. Left: A sample from the target kernel we wish to model, comprising three Matern kernels, which possess correlations given by Table (\ref{tab:correlations}). Middle: The GP has a Minecraft kernel which can ensure a high correlation between the three channels. Right: This GP has a Gaussian-SM covariance, and the limited cross-correlation range results in a substantial loss of coherence between the channels. }
    \label{fig:coherence}
\end{figure*}

Since the Minecraft kernel can be interpreted as a Linear Model of Coregionalisation, one corollary of Theorem~\ref{thm:universality} is that Linear Models of Coregionalisation can approximate any stationary multi-output covariance to arbitrary precision.

The ability of the Minecraft kernel to model highly correlated channels is illustrated in Figure~\ref{fig:cross_spectra_blocks}. Contrary to the case where Gaussian components are used, components can be arranged in a non-overlapping manner, This configuration enables the cross-covariances to now reach the theoretical limit. This can be seen explicitly in the right hand panel, as indicated by a correlation coefficient which saturates at unity across the full range of frequencies.

If we wish to generalise the model to incorporate a possible delay between the different processes, we can adopt the prescription advocated by \citet{parra2017spectral} to yield
\begin{equation} \label{eq:delayed_minecraft_kernel}
\begin{split}
K_{ij}(\bm{\rr}) &=  \sum_{q=1}^Q A_{ij}^{q} C_{ij}^{q} S_{ij}^{q} \, ,  
\\
C_{ij}^{q} &= \cos\left[(\bm \rr + \theta^{q}_{ij})^\top \bm \mu^{q} + \phi^{q}_{ij} ) \right] \, ,  
\\
S_{ij}^{q} &= \prod_{d=1}^D  \sinc \left[ (\rr_d  +\theta^{q}_{ij}) w_d   \right]  \, .
\end{split}
\end{equation}

\subsection{Block shapes}

In principle, the blocks we use to model the spectral density need not be rectangular. Indeed we note that, when working in higher input dimensions, it may prove beneficial for each spectral component to take the form of a pair of ellipsoids. This would allow the potentially long product of sinc functions in equation (\ref{eq:minecraft_k}),  one for each input dimension, to be replaced by a single Bessel function: 
%
\begin{equation} \label{eq:bellipsoidal_kernel}
K_e(\bm{r}) =\sum_{q=1}^Q A_{ij}^{q} ||\bm r \odot \bm w ||^{n/2}  \, \cos(2 \pi \bm{r}^\top \bm{\fo})  J_{n/2}(||\bm r \bm \odot \bm w||)  \, ,
\end{equation}
where $|| \cdot ||$ is the Euclidean norm and $\odot$ denotes an element-wise product.  This kernel retains the desired property of the minecraft kernel, that the components are of finite bandwidth.  


\subsection{Parameterisation}

Two remarks can be made regarding the parameterisation of the Minecraft kernel. As with the multi-output Gaussian-SM, the parameters $\bm{w}^q$ and $\bm{\fo}^q$ do not depend on $i, j$ (which means that all spectral densities share the same blocks as basis functions), and that we want the blocks' support to be non-overlapping.

These choices to not imply any loss of generality (they may require using a larger number of blocks) but they confer two benefits. The first one is to guarantee that no cross-covariance is lost and that the maximum correlation between the signal can be reached. The second is to ensure that each component cannot interact with more than one block per channel, which implies that the number of free parameters in the model grows only as $\mathcal{O}(Q)$.

A final remark on parametrisation is each $A_{ij}^q$ matrix contains $\mathcal{O}(D^2)$ model parameters that typically need to be optimised, but it must at the same time satisfy the positive definiteness constraint. A convenient method is to reparametrise them by their Cholesky factors before exposing the latter to the optimiser. If this $\mathcal{O}(D^2)$ scaling in the number of parameters cannot be afforded, a low rank decomposition of $A_{ij}^q$ may be used instead.

\section{EXPERIMENTS} \label{sec:experiments}
In this section we illustrate the advantages of the proposed approach on two case studies that require the cross-covariances to be accurately accounted for. The first one involves modelling color channels in an image, and the second revisits the use of change points for modelling non-stationary time series.

\subsection{Rendering of images} \label{sec:images}
Colour images are a form of multi-channel data in which there tends to be a significant degree of correlation between the three RGB channels. While some distinct information is invariably carried within each channel, a simple modulation in brightness across an image will be shared across all three of them.

In this section, we aim to generate images which possess a significant degree of correlation between the three channels, so as to generate coherent fluctuations in brightness. As a baseline we select a different covariance for each R, G and B channel corresponding to isotropic 2D Mat\'ern kernels, with regularity 1/2, 3/2, and 5/2 respectively.
 \begin{table}[h]
\caption{Correlations between the RGB channels presented in Figure \ref{fig:coherence}, and the relative errors compared to the target correlations. The values obtained with Minecraft are more than an order of magnitude closer to the target correlations than those from the conventional Gaussian model.}
 \begin{center}
 \begin{tabular}{clccc}
 \toprule
        \multicolumn{2}{l}{Channels} & $R, G$    & $R, B$ & $G, B$ \\ \midrule
 \parbox[t]{2mm}{\multirow{3}{*}{\rotatebox[origin=c]{90}{Correl.}}} & Target   & 0.897 & 0.960 & 0.744 \\
  & Gaussian & 0.849  & 0.908  & 0.566 \\
 & Minecraft & 0.901  & 0.962  & 0.754 \\ \midrule
 \parbox[t]{2mm}{\multirow{3}{*}{\rotatebox[origin=c]{90}{Error}}} & Target   & - & - & - \\
  & Gaussian & -5.4\% & -5.4\% & -23.9\%\\
 & Minecraft & +0.4\% & +0.2\% & +1.2\%\\ \bottomrule
 \end{tabular}
 \end{center}
 \label{tab:correlations}
 \end{table}

Given this fiducial model, we shall attempt to approximate the covariance using both variants of the multi-output SM kernels: the conventional Gaussian components and the Minecraft kernel. In each case, we tile the area $[-2, 2]^2$ of the spectral domain with 64 basis functions, which means that we use 32 components. The amplitude of each component is obtained by minimising the $L^1$ distance between the original Mat\'ern spectrum and the approximations.

The images shown in Figure~\ref{fig:coherence} are obtained by sampling jointly from the channels of the three GPs. In the left hand panel  we see a sample drawn from the correlated Mat\'ern kernels, in the centre panel is a sample drawn from a Minecraft kernel, and in the right hand panel is a sample is drawn from  a conventional Gaussian-SM kernel. The brightness of a given channel - red, green, or blue - is determined by the absolute magnitude of the sample at the given pixel location. Black contours therefore correspond to the regime where the sample crosses zero. Changes in colour correspond to uncorrelated, decoherent fluctuations across the three channels. Meanwhile changes in brightness correspond to correlated, coherent fluctuations across the three channels. More correlated processes will therefore tend to produce fewer changes in colour but more pronounced changes in brightness.

In order to gain a more quantitative measure of performance, beyond this visual illustration, we provide a numerical comparison of the correlations between the channels for the three different kernels. Since the kernels are all stationary, the cross-correlations do not depend on the input location. We can therefore pick 
an arbitrary input point and compute the correlations between the three channels. As shown in Table~\ref{tab:correlations}, the Minecraft formalism offers a much more accurate account of the cross-channel dependencies. 
\begin{figure}[ht]
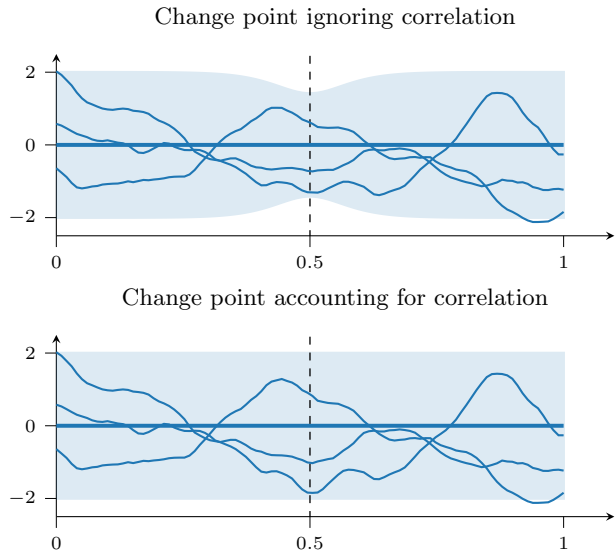

\begin{center}
\input{figures/change_point.tex}
\input{figures/transition_point.tex}
\caption{Samples drawn from a Gaussian process with a change point located at $0.5$ (vertical dashed line). Top: If $f_1$ and $f_2$ are taken to be independent, some variance is lost (the shaded confidence intervals can be seen to contract around the change point location. Bottom: There is no loss of variance when the cross-correlation between $f_1$ and $f_2$ is accounted for. }
\label{fig:matching_transition}
\end{center}
\end{figure}

We also note that the sign of the error is consistently different in the two cases, and this is due to their fundamentally different origins. The Gaussian model under-represents the magnitude of all the correlations, due to its inability to reproduce the full range of possible cross-covariance, as illustrated in Figure \ref{fig:demo}.  In the case of the Minecraft model, it does not possess this limitation, so the leading source of modelling error stems from the accuracy of reproducing the shape of the target spectral density. This is limited by the finite number of components chosen ($Q=32$ in this case). Within the bandwidth of a single component, the spectra of two outputs in the minecraft kernel share the same functional form, and this leads to a slight overestimation of the cross-correlation compared with the target case.

\subsection{Change points in non-stationary time series}

A useful technique for generating a non-stationary process is via a linear combination of stationary processes. For example, given two Gaussian processes of the form $f_1 \sim \mathcal{GP}(0, k_1)$, $f_2 \sim \mathcal{GP}(0, k_2)$ and a sigmoid function $s$, one can define a GP that smoothly transitions from $f_1$ to $f_2$:
\begin{equation} \label{eq:changepoint}
    f(x) = s(x) f_1(x) + (1-s(x)) f_2(x)\, .
\end{equation}
This approach is fairly common in the GP community, it is implemented within the GPflow package, and it serves as one of the building blocks within the automatic statistician \citep{duvenaud2014automatic, lloyd2014automatic}. However, it is typically assumed that the processes $f_1$ and $f_2$ are independent, which results in some of the variance of $f$ vanishing at the transition point. This can be illustrated with a simple experiment where the two kernels we wish to connect are identical, such that $k_1 = k_2$. Since $f_1$ and $f_2$ have the same distribution, one could expect that the global distribution remains unchanged when applying a change point. As seen in the upper panel of  Figure~\ref{fig:matching_transition} this is however not the case, and a significant proportion of the variance is lost around the transition point.

This unwanted behaviour can be addressed by relaxing the assumption that $f_1$ and $f_2$ are independent. On the example detailed above, choosing $f_1 = f_2$ results in the expected behaviour where the distribution of $f$ is exactly the same as the distribution of $f_1$ and $f_2$ (see the lower panel of Figure \ref{fig:matching_transition}). This pedagogical example  highlights the importance of adequately modelling cross-covariances.  

Practitioners may be concerned whether blocks are as efficient as Gaussians at replicating real-world spectral densities.  To assess their capabilities, we study the thirteen benchmark time series used in \cite{lloyd2014automatic}. For each of these time series, we fit models with one change point (as defined in equation \ref{eq:changepoint}) and compare the accuracy of using Gaussian components versus Minecraft's block components. As discussed earlier, the Gaussian-SM cannot fully account for the correlation between the components whereas the block-SM can. In both models, we make use of $Q=10$ components.
Hyperparameters are initialised by selecting the highest marginal likelihood from 1,000 random starting points, before being optimised via SciPy's implementation of the conjugate gradients algorithm for 2,000 iterations.  Finally, the accuracy of the models are evaluated by computing for each time series the Standard Mean Square Error (SMSE). Following \cite{lloyd2014automatic}, we adopt a 90/10 train/test split ratio. 

As seen in Table~\ref{tab:tseries}, the blocks compare as well as, and in many cases better than, their Gaussian counterparts. Quoted uncertainties are estimated by repeating the experiments with ten different random seeds.  It should however be noted that we do not necessarily recommend using change points for these datasets since in some cases better performances are achievable with stationary covariances.


 \begin{table}[h]
\caption{Comparing the SMSE values for benchmark time series when adopting two different choices of spectral component. }
 \begin{sc}
 \begin{center}
 \begin{tabular}{lccc}
 \toprule
 Dataset             & Block                  & Gaussian \\\midrule
 {airline}           &\bf{0.39} ($\pm0.07$)    &0.55 ($\pm0.06$) \\
 {births}            &0.99          ($\pm0.01$)  &\textbf{0.81} ($\pm0.4$) \\
 {call centre}       &\textbf{7.28}  ($\pm3.8$).  &7.83.  ($\pm4.5$)    \\
 {gas production}    &1.46  ($\pm0.24$)         &\textbf{0.57}  ($\pm0.16$)\\
 {internet}         &\textbf{1.00}  ($\pm0.04$) &0.89  ($\pm0.13$) \\
 {mauna}            &0.52  ($\pm0.1$)            &\textbf{0.45}  ($\pm0.05$) \\
 {radio}            &\textbf{1.12}  ($\pm0.08$).  &1.46  ($\pm 0.09$) \\
 {solar}            &\textbf{1.35}  ($\pm0.4$)        &1.50  ($\pm0.16$) \\
 {sulphuric}        &\textbf{5.35}    ($\pm0.13$)   &5.94  ($\pm0.1$) \\
 {temperature}     &\textbf{0.77} ($\pm0.14$)       &0.93  ($\pm0.1$) \\
 {unemployment}    &\textbf{1.21}      ($\pm0.05$)    &1.22  ($\pm0.12$) \\
 {wages}           &2.38  ($\pm0.09$)              &\textbf{2.08}  ($\pm0.12$) \\
 {wheat}           &\textbf{1.19}   ($\pm0.7$)          &1.41  ($\pm0.1$) \\
 \midrule
 mean &\textbf{1.92} ($\pm0.1$) &1.97 ($\pm0.1$) \\
 best performance &\textbf{9/13}  & 4/13  \\
 \bottomrule
 \end{tabular}
 \end{center}
 \end{sc}
 \label{tab:tseries}
 \end{table}

\section{CONCLUSIONS} \label{sec:conclusions}

Modelling the cross-covariances between Gaussian Processes is a challenging but important task in machine learning. We have highlighted a significant blind spot in the conventional approach of modeling cross-covariances in the spectral domain: the important regime where a pair of processes are significantly correlated (either in the positive or negative sense) cannot be adequately modelled via a mixture of Gaussians. Aside from the trivial case where the spectral densities of each process are of the same functional form, overlapping components lead to a loss of cross-covariance. In general, we advise that multi-output spectral kernels adopt different initialisation strategies to the standard single-output case - one which suppresses the generation of very broad components.   


We present a new multi-output spectral kernel which offers a resolution to this problem. By utilising a  basis kernel of finite bandwidth, 
we can avoid the loss of cross-covariance caused by overlapping components. By replacing a conventional mixture of Gaussians with a mixture of blocks of constant spectral density, this kernel opens up access to the full range of cross-covariances associated with stationary processes.  A further key advantage of this approach is that, without loss of generality,  all spectral and cross spectral densities can share the same base parameterisation, which helps restrict the number of free parameters in the model.  Finally, we have also presented and demonstrated a method for combining these correlated processes in order to model non-stationary time series. 

\subsubsection*{Acknowledgements}
We would like to thanks James Hensman for several helpful discussions, and the anonymous reviewers for their constructive feedback.

\bibliographystyle{unsrtnat}
\bibliography{minecraft.bib}

\begin{thebibliography}{19}
\providecommand{\natexlab}[1]{#1}
\providecommand{\url}[1]{\texttt{#1}}
\expandafter\ifx\csname urlstyle\endcsname\relax
  \providecommand{\doi}[1]{doi: #1}\else
  \providecommand{\doi}{doi: \begingroup \urlstyle{rm}\Url}\fi

\bibitem[Rasmussen and Williams(2006)]{rasmussen2006}
Carl~E Rasmussen and Christopher~KI Williams.
\newblock \emph{{Gaussian Processes for Machine Learning}}.
\newblock MIT Press, 2006.

\bibitem[Nickisch and Rasmussen(2008)]{nickisch2008approximations}
Hannes Nickisch and Carl~Edward Rasmussen.
\newblock Approximations for binary gaussian process classification.
\newblock \emph{Journal of Machine Learning Research}, 9\penalty0
  (Oct):\penalty0 2035--2078, 2008.

\bibitem[John and Hensman(2018)]{john2018large}
ST~John and James Hensman.
\newblock Large-scale cox process inference using variational fourier features.
\newblock In \emph{International Conference on Machine Learning}, pages
  2362--2370. PMLR, 2018.

\bibitem[L{\'o}pez-Lopera et~al.(2019)L{\'o}pez-Lopera, Durrande, and
  Alvarez]{lopez2019physically}
Andr{\'e}s~Felipe L{\'o}pez-Lopera, Nicolas Durrande, and Mauricio~Alexander
  Alvarez.
\newblock Physically-inspired gaussian process models for post-transcriptional
  regulation in drosophila.
\newblock \emph{IEEE/ACM transactions on computational biology and
  bioinformatics}, 2019.

\bibitem[Bach(2009)]{bach2009high}
Francis Bach.
\newblock High-dimensional non-linear variable selection through hierarchical
  kernel learning.
\newblock \emph{arXiv preprint arXiv:0909.0844}, 2009.

\bibitem[Duvenaud(2014)]{duvenaud2014automatic}
David Duvenaud.
\newblock \emph{Automatic model construction with Gaussian processes}.
\newblock PhD thesis, University of Cambridge, 2014.

\bibitem[Wilson and Adams(2013)]{wilson2013gaussian}
Andrew Wilson and Ryan Adams.
\newblock Gaussian process kernels for pattern discovery and extrapolation.
\newblock In \emph{International Conference on Machine Learning}, pages
  1067--1075, 2013.

\bibitem[Sun et~al.(2018)Sun, Zhang, Wang, Zeng, Li, and
  Grosse]{sun2018differentiable}
Shengyang Sun, Guodong Zhang, Chaoqi Wang, Wenyuan Zeng, Jiaman Li, and Roger
  Grosse.
\newblock Differentiable compositional kernel learning for gaussian processes.
\newblock In \emph{International Conference on Machine Learning}, pages
  4828--4837. PMLR, 2018.

\bibitem[Remes et~al.(2017)Remes, Heinonen, and Kaski]{remes2017non}
Sami Remes, Markus Heinonen, and Samuel Kaski.
\newblock Non-stationary spectral kernels.
\newblock In \emph{Advances in Neural Information Processing Systems}, pages
  4642--4651, 2017.

\bibitem[Tobar(2019)]{tobar2019band}
Felipe Tobar.
\newblock Band-limited gaussian processes: The sinc kernel.
\newblock In \emph{Advances in Neural Information Processing Systems}, pages
  12728--12738, 2019.

\bibitem[Parra and Tobar(2017)]{parra2017spectral}
Gabriel Parra and Felipe Tobar.
\newblock Spectral mixture kernels for multi-output gaussian processes.
\newblock In \emph{Advances in Neural Information Processing Systems}, pages
  6681--6690, 2017.

\bibitem[Alvarez et~al.(2011)Alvarez, Rosasco, and
  Lawrence]{alvarez2011kernels}
Mauricio~A Alvarez, Lorenzo Rosasco, and Neil~D Lawrence.
\newblock Kernels for vector-valued functions: A review.
\newblock \emph{arXiv preprint arXiv:1106.6251}, 2011.

\bibitem[Ulrich et~al.(2015)Ulrich, Carlson, Dzirasa, and Carin]{ulrich2015gp}
Kyle~R Ulrich, David~E Carlson, Kafui Dzirasa, and Lawrence Carin.
\newblock Gp kernels for cross-spectrum analysis.
\newblock In \emph{Advances in neural information processing systems}, pages
  1999--2007, 2015.

\bibitem[Yao(1967)]{yao1967applications}
Kung Yao.
\newblock Applications of reproducing kernel hilbert spaces--bandlimited signal
  models.
\newblock \emph{Information and Control}, 11\penalty0 (4):\penalty0 429--444,
  1967.

\bibitem[De~G.~Matthews et~al.(2017)De~G.~Matthews, Van Der~Wilk, Nickson,
  Fujii, Boukouvalas, Le{\'o}n-Villagr{\'a}, Ghahramani, and
  Hensman]{de2017gpflow}
Alexander~G De~G.~Matthews, Mark Van Der~Wilk, Tom Nickson, Keisuke Fujii,
  Alexis Boukouvalas, Pablo Le{\'o}n-Villagr{\'a}, Zoubin Ghahramani, and James
  Hensman.
\newblock Gpflow: A gaussian process library using tensorflow.
\newblock \emph{The Journal of Machine Learning Research}, 18\penalty0
  (1):\penalty0 1299--1304, 2017.

\bibitem[Wendland(2004)]{wendland2004scattered}
Holger Wendland.
\newblock \emph{Scattered data approximation}, volume~17.
\newblock Cambridge university press, 2004.

\bibitem[Wilson(2014)]{wilson2014covariance}
Andrew~Gordon Wilson.
\newblock \emph{Covariance kernels for fast automatic pattern discovery and
  extrapolation with Gaussian processes}.
\newblock PhD thesis, University of Cambridge, 2014.

\bibitem[Bogachev(2007)]{bogachev2007measure}
Vladimir~I Bogachev.
\newblock \emph{Measure theory}, volume~1.
\newblock Springer Science \& Business Media, 2007.

\bibitem[Lloyd et~al.(2014)Lloyd, Duvenaud, Grosse, Tenenbaum, and
  Ghahramani]{lloyd2014automatic}
James~Robert Lloyd, David Duvenaud, Roger Grosse, Joshua Tenenbaum, and Zoubin
  Ghahramani.
\newblock Automatic construction and natural-language description of
  nonparametric regression models.
\newblock In \emph{Twenty-eighth AAAI conference on artificial intelligence},
  2014.

\end{thebibliography}

\end{document}